\documentclass[11pt]{article}

\usepackage[margin=1in]{geometry}
\usepackage{amsmath,amssymb,amsthm}
\usepackage{booktabs}
\usepackage{array}
\usepackage{xcolor}
\usepackage{graphicx}
\usepackage{enumitem}
\usepackage{titlesec}
\usepackage{fancyhdr}
\usepackage{microtype}
\usepackage{float}

\definecolor{deciblue}{HTML}{1F4E79}
\definecolor{decired}{HTML}{B91C1C}
\definecolor{darkgray}{HTML}{374151}

\titleformat{\section}{\Large\bfseries\color{deciblue}}{\thesection}{0.7em}{}
\titleformat{\subsection}{\large\bfseries\color{darkgray}}{\thesubsection}{0.7em}{}
\titleformat{\subsubsection}{\normalsize\bfseries\color{darkgray}}{\thesubsubsection}{0.7em}{}

\newtheorem{definition}{Definition}
\newtheorem{theorem}{Theorem}

\newtheorem{lemma}{Lemma}
\newtheorem{corollary}{Corollary}

\newcommand{\ALLOW}{\mathrm{ALLOW}}
\newcommand{\NONACTION}{\mathrm{NON\mbox{-}ACTION}}
\newcommand{\ESCALATE}{\mathrm{ESCALATE}}
\newcommand{\DEFER}{\mathrm{DEFER}}
\newcommand{\INFO}{\mathrm{REQUEST\mbox{-}INFO}}

\begin{document}

\begin{titlepage}
\centering
\vspace*{1.2cm}
{\Huge\bfseries The Pre-Action Legitimacy Gap in AI Systems\par}
\vspace{0.4cm}
{\Large A Deterministic, Non-Compensatory Decision Boundary for Execution\par}
\vspace{1.2cm}
{\large Gadi Lavi\par}
\vspace{0.15cm}
{\normalsize Independent Researcher, DECIMAG\par}
\vspace{0.15cm}
{\normalsize https://decimag-protocol.netlify.app/\par}
\vspace{0.6cm}
{\normalsize April 2026\par}
\vfill
\begin{minipage}{0.86\textwidth}
\textbf{Abstract.}
AI systems are increasingly capable of initiating real-world actions, yet current architectures lack a formal mechanism to determine whether a decision should be allowed to execute in the first place. Existing paradigms such as authorization, safety alignment, policy enforcement, runtime governance, and statistical certification evaluate identity, compliance, risk, or observed behavior. They often assume that once a decision is permitted, safe enough, or policy-compliant, it may proceed to execution.

This paper defines the \emph{pre-action legitimacy problem}: the missing computational step that determines whether an AI-generated decision has the right to exist as an executable event prior to execution. We introduce a minimal formalization of a \emph{Right-to-Act protocol}: a deterministic, non-compensatory boundary that treats legitimacy as a feasibility condition rather than a score. A decision is allowed to execute only if all required structural constraints are satisfied; no positive signal may compensate for a failed required condition.

We prove a non-equivalence result showing that compensatory scoring systems cannot guarantee pre-action legitimacy. We then position this boundary relative to authorization, safety, runtime governance, and statistical certification, and present a case study in AI-driven account suspension. The contribution is intentionally architectural and formal: it defines a missing decision primitive without prescribing or disclosing any proprietary implementation.
\end{minipage}
\vfill
{\small Preprint. This paper introduces a formal framing for pre-action legitimacy and non-compensatory execution boundaries in AI systems.}
\end{titlepage}

\section{Introduction}

AI systems are moving from advisory outputs toward executable decisions. In agentic and tool-using systems, model outputs may trigger fund transfers, account restrictions, database operations, infrastructure changes, procurement workflows, or other actions with legal, financial, operational, or reputational consequences.

A common architecture can be summarized as follows:

\begin{center}
\textit{AI decision} $\rightarrow$ \textit{validation} $\rightarrow$ \textit{execution}.
\end{center}

The validation step may include access control, safety filters, policy checks, risk scoring, compliance rules, or post-hoc audit. These mechanisms are necessary. However, they do not fully answer a different question:

\begin{quote}
\centering
\textbf{Should this decision be allowed to exist as an executable event at all?}
\end{quote}

This paper argues that this question defines a distinct architectural gap. Existing paradigms evaluate whether a user is allowed, whether an output is harmful, whether a policy is violated, or whether observed system behavior falls within certified risk bounds. A pre-action legitimacy boundary instead evaluates whether the AI-generated decision has earned the right to proceed toward execution before ordinary validation or certification is treated as sufficient.

The goal is not to replace authorization, safety, runtime governance, or certification. The goal is to formalize a missing condition before execution: \emph{the right to act}. We call this gap the \emph{Pre-Action Legitimacy Gap}.

\subsection{Contributions}

This paper makes four contributions:

\begin{enumerate}[leftmargin=1.2cm]
  \item It defines the pre-action legitimacy problem as a distinct computational and architectural problem for AI systems that act in the world.
  \item It formalizes a non-compensatory Right-to-Act decision boundary using minimal constraint notation that avoids implementation-specific details.
  \item It proves a non-equivalence theorem: compensatory scoring systems cannot guarantee pre-action legitimacy when required constraints must hold individually.
  \item It provides a case study showing how the proposed boundary changes the outcome of a high-confidence, policy-compliant account suspension decision.
\end{enumerate}

\section{Background and Related Work}

\subsection{Authorization and access control}

Authorization systems determine whether an identity, role, process, or agent is permitted to access a resource or perform a class of operations. These systems ask: \emph{who is allowed to act?} They do not necessarily determine whether a specific AI-generated decision is structurally legitimate in its context. An authorized actor may still produce an action that should not proceed.

\subsection{AI safety and alignment}

AI safety and alignment research focuses on reducing harmful, unintended, or misaligned behavior in AI systems. This includes model training, reward design, refusal behavior, monitoring, and evaluation. These methods are essential but often probabilistic, model-dependent, or focused on generated content and behavior rather than per-decision execution legitimacy.

\subsection{Runtime governance for AI agents}

Recent work on runtime governance treats agent execution paths as the object of governance. Kaptein, Khan, and Podstavnychy formalize policies over paths, arguing that prompts and static access control cannot fully govern path-dependent behavior in agentic systems \cite{kaptein2026runtime}. This line of work is close to the present paper in its focus on runtime boundaries, but it remains oriented around governance and policy over paths. The present paper isolates a narrower primitive: whether a proposed decision has the right to proceed before execution.

\subsection{Pre-action authorization}

Uchibeke characterizes the pre-action authorization problem for autonomous agents, emphasizing deterministic enforcement before tool calls and signed audit records \cite{uchibeke2026before}. This is directly relevant to the present work. The distinction here is conceptual: pre-action authorization asks whether an action is permitted by policy or authority. Pre-action legitimacy asks whether the decision satisfies required structural conditions such that execution is legitimate in the first place.

\subsection{Statistical certification and AI risk regulation}

Levy and Perl propose a statistical certification framework for black-box AI risk regulation, focusing on bounding system behavior under uncertainty \cite{levy2026bounding}. Their framing is valuable for regulation and certification of systems whose internals are opaque. The present paper complements this approach by addressing the per-decision boundary before execution. Certification can state that a system behaves within acceptable bounds over a distribution; a Right-to-Act layer asks whether a particular decision should become executable.

\subsection{Risk management frameworks}

The NIST AI Risk Management Framework provides guidance for managing AI risks across design, deployment, and use \cite{nist2023airmf}. The EU AI Act establishes a risk-based regulatory framework for AI systems \cite{eu2024aiact}. OWASP has also documented risks associated with agentic AI execution layers and autonomous skills \cite{owasp2025agentic}. These frameworks motivate the need for systematic controls, but they do not by themselves define the non-compensatory computational boundary proposed here.

\section{The Pre-Action Legitimacy Gap}

\begin{definition}[Pre-Action Legitimacy Gap]
The Pre-Action Legitimacy Gap is the absence of a formal mechanism that determines, prior to execution, whether an AI-generated decision has the right to exist as an executable event.
\end{definition}

The gap is easiest to see in cases where a decision is:

\begin{itemize}[leftmargin=1.2cm]
  \item authorized by identity or role;
  \item compliant with an explicit policy;
  \item scored as low risk or high confidence;
  \item still inappropriate, premature, unjustified, or structurally invalid in context.
\end{itemize}

Examples include disabling an account without sufficient contextual verification, executing a payment when authority exists but justification is incomplete, or allowing an AI agent to take irreversible infrastructure action under ambiguous state. In such cases, the failure is not merely a safety failure, a policy failure, or an authorization failure. It is a legitimacy failure at the execution boundary.

\section{Formal Model}

This section presents a deliberately minimal model. It is intended to express the boundary mathematically without exposing any implementation-specific logic, calibration method, or internal gate design.

\subsection{Action and decision space}

Let $A$ denote the set of candidate executable decisions produced by, or routed through, an AI system. Each decision $a \in A$ may correspond to a proposed action such as disabling an account, issuing a payment, escalating a transaction, modifying access privileges, or invoking a tool.

We treat each decision abstractly as a structured object:

\begin{equation}
 a = (x, c, t, s, \tau),
\end{equation}

where $x$ denotes the proposed operation, $c$ denotes context, $t$ denotes target, $s$ denotes scope, and $\tau$ denotes timing. This tuple is illustrative only. The framework does not require this exact representation.

\subsection{Legitimacy constraints}

Let

\begin{equation}
 C = \{C_1, C_2, \ldots, C_n\}
\end{equation}

be a finite set of required legitimacy constraints. Each constraint is a deterministic predicate:

\begin{equation}
 C_i : A \rightarrow \{0,1\}.
\end{equation}

$C_i(a)=1$ means that the required condition holds for decision $a$. $C_i(a)=0$ means that the condition fails.

The specific content of the constraints is domain-specific and not defined in this paper. Examples of constraint categories may include sufficient context, reversibility, justification, authority, timing, proportionality, or evidence sufficiency. These are examples only, not an implementation specification.

\subsection{Right-to-Act decision function}

Define the Right-to-Act decision function:

\begin{equation}
D(a) =
\begin{cases}
\ALLOW, & \text{if } \forall i \in \{1,\ldots,n\},\ C_i(a)=1,\\
\NONACTION, & \text{otherwise.}
\end{cases}
\label{eq:rta}
\end{equation}

In operational systems, $\NONACTION$ may be decomposed into outcomes such as $\DEFER$, $\ESCALATE$, or $\INFO$. For the theoretical argument, it is sufficient to group all non-execution outcomes under $\NONACTION$.

\subsection{Legitimacy as feasibility}

Equation \ref{eq:rta} defines legitimacy as a feasibility condition. The question is not whether the decision receives a sufficiently high score, but whether it lies inside a feasible region:

\begin{equation}
F = \{a \in A : \forall i,\ C_i(a)=1\}.
\end{equation}

Then:

\begin{equation}
D(a)=\ALLOW \iff a \in F.
\end{equation}

This is the central distinction. A Right-to-Act protocol does not optimize a score. It enforces a boundary.

\section{Non-Compensatory Decision Boundary}

\begin{definition}[Non-compensatory system]
A decision system is non-compensatory if a failed required condition cannot be offset by any number of satisfied conditions.
\end{definition}

In the present framework:

\begin{equation}
\exists j \text{ such that } C_j(a)=0 \Rightarrow D(a)\neq\ALLOW.
\end{equation}

This property is essential. It prevents a strong confidence signal, a favorable risk score, or a policy match from compensating for a failed structural condition.

\begin{lemma}[Monotonic rejection]
If $a \notin F$, then increasing the number of satisfied constraints without repairing the failed constraint does not change the outcome.
\end{lemma}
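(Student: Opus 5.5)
The plan is to make the informal phrase ``increasing the number of satisfied constraints without repairing the failed constraint'' precise, and then read the conclusion off the definition of $D$ in Equation~\ref{eq:rta}. Fix $a \notin F$. By the definition of $F$ there is some index $j$ with $C_j(a)=0$. I will model ``increasing the number of satisfied constraints'' in one of two ways, and handle both.

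\emph{Modification of the decision.} Pass from $a$ to some $a' \in A$ such that $\{i : C_i(a')=1\} \supseteq \{i : C_i(a)=1\}$, possibly strictly, while still $C_j(a')=0$. This last condition is what ``without repairing the failed constraint'' means.

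\emph{Enlargement of the constraint set.} Pass from $C$ to some $C' \supseteq C$, where every added predicate is satisfied by $a$, and let $D'$ be the corresponding Right-to-Act function.

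For the first reading, the key step is immediate. Since $C_j(a')=0$, the universal condition $\forall i,\ C_i(a')=1$ fails. Hence $a' \notin F$, and the ``otherwise'' branch of Equation~\ref{eq:rta} gives $D(a')=\NONACTION=D(a)$. Equivalently, this is the non-compensatory implication $\exists j\,C_j(a)=0 \Rightarrow D(a)\neq\ALLOW$, applied to $a'$ with the same witness $j$. For the second reading, $C_j$ still belongs to $C'$ and still has $C_j(a)=0$. So the feasible region $F'$ for $C'$ excludes $a$, and again $D'(a)=\NONACTION$. In fact $F'\subseteq F$, since adding conjuncts can only shrink a feasible region. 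This also shows that enlarging the constraint set never turns a rejection into an allowance.

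To stress that the count of satisfied constraints plays no role, I would add a remark. $D$ depends on $(C_1(a),\dots,C_n(a))$ only through the conjunction $\bigwedge_i C_i(a)$. That conjunction is $0$ as soon as any single coordinate is $0$, whatever the values of the others.

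The only real obstacle is definitional rather than mathematical: the lemma's wording admits several formalizations. The work is in stating both readings cleanly, especially the requirement that the witness $j$ stays failed. Once that is fixed, each case is a one-line consequence of the definition of $D$.
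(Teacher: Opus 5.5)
Your proposal is correct and follows the same argument as the paper: fix a witness $j$ with $C_j(a)=0$, note that it stays failed, so the universal condition in Equation~\ref{eq:rta} fails and the outcome remains $\NONACTION$. Your two explicit formalizations (modifying the decision versus enlarging the constraint set with predicates $a$ satisfies) make precise what the paper's one-line proof leaves informal when it speaks of ``additional satisfied constraints $C_k(a)=1$ for $k\neq j$.''
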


\begin{proof}
If $a \notin F$, then by definition there exists $j$ such that $C_j(a)=0$. Equation \ref{eq:rta} requires all constraints to equal 1 for $\ALLOW$. Any additional satisfied constraints $C_k(a)=1$ for $k\neq j$ do not change the fact that $C_j(a)=0$. Therefore $D(a)=\NONACTION$ remains unchanged.
\end{proof}

\section{Non-Equivalence Theorem}

Many deployed AI decision systems use scoring or threshold logic. We now show why this class of systems cannot, in general, guarantee pre-action legitimacy when legitimacy requires non-compensatory constraint satisfaction.

\begin{definition}[Compensatory scoring system]
A compensatory scoring system maps a decision $a$ to a real-valued score:
\begin{equation}
S(a)=\sum_{i=1}^{m} w_i x_i(a),
\end{equation}
where $w_i\geq 0$, $x_i(a)\in[0,1]$, and the system allows execution when:
\begin{equation}
S(a)\geq \theta.
\end{equation}
\end{definition}

\begin{theorem}[Non-equivalence]
A compensatory scoring system is not equivalent to a non-compensatory Right-to-Act system when at least one required condition must hold individually.
\end{theorem}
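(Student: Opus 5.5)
We interpret ``not equivalent'' as the failure of the natural identification between the two systems: a compensatory scoring system $(w,x,\theta)$ with its execution rule $S(a)\ge\theta$ cannot, in general, reproduce the boundary $D$ of Equation \ref{eq:rta}. Concretely, we will construct a decision $a$ with $a\notin F$, so that $D(a)=\NONACTION$, but $S(a)\ge\theta$, so the scoring system executes it. By the non-compensatory property (and the Monotonic rejection lemma), no amount of additional satisfied constraints changes $D(a)$, while the score can be raised by them. This mismatch is the content of the theorem.

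The construction is as follows. Take the natural encoding in which the score features are the constraint indicators together with additional positive signals such as confidence or policy match. Set $x_i(a)=C_i(a)$ for $i\le n$ and $x_{n+1},\dots,x_m$ for the extra signals, and suppose there is a required condition $C_j$. Note first that the threshold must be attainable by some feasible decision, since otherwise the scoring system never executes anything and trivially disagrees with $D$ on any $a\in F$; this is the degenerate case. Now pick $a$ with $C_j(a)=0$ and every other feature equal to $1$. Then $S(a)=W-w_j$, where $W=\sum_i w_i$. Whenever $w_j\le W-\theta$, the scoring system allows $a$ while $D(a)=\NONACTION$. So the two systems disagree unless $w_j>W-\theta$ for every required $j$.

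The main obstacle is the remaining case, where the weights are chosen so that a single failed constraint always pushes the score below threshold. An example is $\theta>W-\min_j w_j$, with the extra signals carrying negligible weight. In that regime a linear threshold can actually realize an AND of binary indicators, since AND is linearly separable. So the theorem is false under the literal reading that no weights whatsoever work, and we must restrict what ``compensatory'' means. Our first attempt is to read the definition's intent, namely that positive signals are permitted to offset deficits: the system is compensatory when some trade-off exists, i.e.\ when there exist $j$ and a signal $k\ne j$ such that $w_k\ge w_j$ could offset the loss of $C_j$. Under that reading the construction above applies directly and yields a counterexample.

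A second, cleaner option is to treat the $x_i$ as genuinely $[0,1]$-valued, graded rather than binary, with $C_j(a)=1$ iff $x_j(a)=1$. Then the feasible set for the scores is $\{x\in[0,1]^m : x_j=1 \text{ for all required } j\}$. This set is a face of the cube and cannot equal a half-space intersected with the cube $\{\sum w_i x_i\ge\theta\}$. The reason is that any such half-space containing the face but not the point with $x_j=0$ also contains $x_j=1-\varepsilon$ for small $\varepsilon>0$. That can be seen by continuity, or by compensating with another coordinate whose weight is positive. Such a point violates $C_j$ yet is allowed. We would present this continuity argument as the main proof and the binary construction as the illustrative case, stating the mild nondegeneracy assumption explicitly.
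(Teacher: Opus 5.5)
Your binary construction is essentially the paper's proof. The paper fixes a required $C_k$, builds $a$ with $C_k(a)=0$, and chooses the remaining features so that $\sum_{i\neq k}w_ix_i(a)\geq\theta$. Your diagnosis of that step is also correct. It silently needs $\sum_{i\neq k}w_i\geq\theta$, plus a decision that realizes the chosen feature profile. Without that, a threshold such as $x_i=C_i$, $w_i=1$, $\theta=n$ realizes the conjunction exactly. The paper disposes of this case only in its ``No weighting repair'' corollary, by declaring such a weighting a hard-constraint system rather than a scoring system.

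The genuine gap is in the graded argument you intend to make the main proof. You claim the face $\{x\in[0,1]^m : x_j=1 \text{ for all required } j\}$ cannot be the cube cut by a half-space $\{\sum_i w_ix_i\geq\theta\}$. That is false: every face of a cube is cut out by a supporting hyperplane. Let $R$ be the required indices, and take $w_j=1$ for $j\in R$, $w_i=0$ otherwise, and $\theta=|R|$. Then $S(x)\geq\theta$ iff $x_j=1$ for all $j\in R$. Moving any $x_j$ to $1-\varepsilon$ gives $S=|R|-\varepsilon<\theta$, so neither continuity nor compensation through another coordinate produces a violating point.

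What your continuity step actually needs is a point of the face with strict slack $S>\theta$. Since $\max S=W=\sum_i w_i$, this means $\theta<W$. That is stronger than the ``threshold attainable'' condition $\theta\leq W$ you flagged. If the half-space contains the face, then $\theta\leq\sum_{j\in R}w_j$, and $\theta<W$ fails exactly when all weight sits on the required coordinates and $\theta=\sum_{j\in R}w_j$. So passing to graded features does not remove the nondegeneracy hypothesis; it trades $\theta\leq W-w_k$ for $\theta<W$.

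State that hypothesis explicitly, together with realizability of the needed feature profiles, which you and the paper both take for granted. The argument then goes through and gives a sharper conclusion than the paper's. A graded scorer agrees with $D$ only when it is a positively weighted sum of the required features alone, with threshold equal to its maximum, i.e.\ a hard-constraint system in disguise.

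Separately, you formalize ``some trade-off exists'' as $w_k\geq w_j$ for some $k\neq j$, but that is not the condition your binary construction uses. With two required indicators, $w=(1,1)$ and $\theta=2$, it holds, yet the scorer is exactly the conjunction. The correct hypothesis is the one you computed yourself: $W-w_j\geq\theta$ for some required $j$.
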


\begin{proof}
Let $C_k$ be a required condition in a Right-to-Act system. Construct a decision $a$ such that $C_k(a)=0$ while all other favorable signals are high. In a scoring system, choose features $x_i(a)$ for $i\neq k$ sufficiently high such that:
\begin{equation}
\sum_{i\neq k} w_i x_i(a) \geq \theta.
\end{equation}
Then the compensatory system allows execution because $S(a)\geq\theta$.

However, the Right-to-Act system rejects the same decision because $C_k(a)=0$, so $a\notin F$ and $D(a)=\NONACTION$.

Thus, there exists at least one decision for which the scoring system allows and the non-compensatory system rejects. Therefore the systems are not equivalent.
\end{proof}

\begin{corollary}[No weighting repair]
No choice of weights in a compensatory scoring system can guarantee non-compensatory legitimacy unless the scoring system is transformed into an explicit hard-constraint system.
\end{corollary}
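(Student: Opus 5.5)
The plan is to make the informal corollary precise and then prove it by contradiction, reusing the construction from the proof of the Non-equivalence theorem. First I fix what ``guarantee non-compensatory legitimacy'' means. A weighted system $(w,\theta)$ does so if
\begin{equation}
S(a)\geq\theta \Rightarrow a\in F
\end{equation}
for every admissible decision. Here the admissible decisions are those whose features $x_i(a)$ may take any values in $[0,1]$ compatible with the constraints. Second, I fix what ``transformed into an explicit hard-constraint system'' means. The system is degenerate in one of the following ways:
\begin{itemize}[leftmargin=1.2cm]
  \item it never allows any decision;
  \item $\theta$ equals the total attainable score and every feature is tied to a constraint, so the threshold encodes the conjunction $\forall i,\ C_i(a)=1$;
  \item the feature map $x_i$ itself encodes the constraint indicators in such a way that no trade-off between features is possible.
\end{itemize}
With these definitions, the corollary says that every non-degenerate choice of $w\geq 0$ and $\theta$ admits a decision $a\notin F$ with $S(a)\geq\theta$.

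The key step is a case split on the weight $w_k$ attached to a required constraint $C_k$, following the same construction as the theorem.

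\emph{Case $w_k=0$.} The score ignores $C_k$ entirely. Setting the other features high yields an allowed decision with $C_k(a)=0$, exactly as in the proof of the Non-equivalence theorem, provided the system allows anything at all.

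\emph{Case $w_k>0$.} Let $W=\sum_i w_i$. A failure of $C_k$ costs at most $w_k$, so a decision with $C_k(a)=0$ and all other features equal to $1$ scores $W-w_k$. The system rejects it only if $\theta> W-w_k$. If this must hold for every $k$, then $\theta>W-\min_k w_k$. In that regime the system allows only decisions in which no required feature drops by the weight of the smallest one. For indicator-valued features, this forces every required feature to equal $1$, which is precisely the conjunction defining $F$ in the Right-to-Act decision function, i.e.\ a hard-constraint system in disguise. If instead some non-required feature carries positive weight, a trade-off remains and compensation is possible. I would exhibit it by lowering that feature slightly while keeping the required ones fixed.

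The main obstacle is this last step: showing that any threshold high enough to block every single-constraint failure collapses the scoring rule into the conjunction. Near the boundary, fractional features $x_i\in[0,1]$ could in principle let partial satisfaction of $C_k$ be compensated by slack elsewhere. I would first handle the cleaner case $x_i\in\{0,1\}$ tied to the constraints, where the equivalence is immediate. I would then argue that in the fractional case, whenever $\theta<W$, some decision with $C_k(a)=0$ but $x_k(a)$ close to $1$ still clears the threshold. That is possible as long as $x_k$ is not itself a hard indicator of $C_k$, and this is exactly the transformation the corollary allows as an exception.
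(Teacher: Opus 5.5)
Your dichotomy is the paper's: either a lone violation of a required condition can still reach $\theta$, so compensation occurs, or that condition acts as a hard blocker, which the paper identifies with the constraint system. The gap is in turning the second branch into a statement about weights. In the regime $\theta>W-\min_k w_k$ with $0/1$ required features, you say positive weight on a non-required feature leaves a trade-off and hence compensation. Your witness lowers that feature while keeping the required ones fixed, but this decision stays inside $F$, so it does not violate $S(a)\geq\theta\Rightarrow a\in F$. Trading off non-required signals is not what non-compensatory forbids; only offsetting a failed required condition is.

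Concretely, let $C_1,C_2$ be the only required constraints, with indicators $x_1=C_1$, $x_2=C_2$ and $w_1=w_2=10$. Add non-required $x_3,x_4\in[0,1]$ with $w_3=w_4=1$, and set $\theta=21$. Any violation scores at most $12$, and the rule allows exactly the decisions in $F$ with $x_3+x_4\geq 1$, so legitimacy is guaranteed. Yet the system allows something, $\theta\neq W=22$, not every feature is tied to a constraint, and $x_3,x_4$ trade off. It is non-degenerate under all three of your bullets, so the precise restatement you set out to prove is false as written.

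The fractional case has the same problem. Your claim is that for $\theta<W$ some $a$ with $C_k(a)=0$ and $x_k(a)$ near $1$ clears the threshold. That needs an assumption the setup does not supply, and your phrase ``compatible with the constraints'' leaves it open: that $x_k$ can approach $1$, with the other features at $1$, while $C_k$ fails. Suppose instead $x_k(a)\leq 1/2$ whenever $C_k(a)=0$. Then any $\theta\in(W-w_k/2,W)$ blocks every violation of $C_k$, although $x_k$ is not a $0/1$ indicator. To absorb such cases you must stretch ``hard indicator'' until it just means that no $a$ with $C_k(a)=0$ reaches $\theta$.

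That is the correct exception, and once you adopt it the weight analysis is unnecessary. Since $A\setminus F=\bigcup_k\{a: C_k(a)=0\}$, the guarantee holds if and only if, for every required $k$, every $a$ with $C_k(a)=0$ has $S(a)<\theta$. That says each required condition is individually a hard blocker, which is exactly the paper's short argument. Your $w_k=0$ case, like the paper's theorem, also tacitly assumes the other features can be set high independently of $C_k$.
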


\begin{proof}
If a required condition can be violated while other signals contribute positively, compensation is possible. To eliminate compensation, the violated condition must act as a hard blocker. But this changes the system from pure scoring to a constraint system, which is precisely the Right-to-Act structure.
\end{proof}

\section{Architectural Placement}

Figure \ref{fig:architecture} contrasts the standard execution stack with a Right-to-Act boundary. The figure is conceptual and does not disclose internal implementation.

\begin{figure}[H]
\centering
\includegraphics[width=0.98\textwidth]{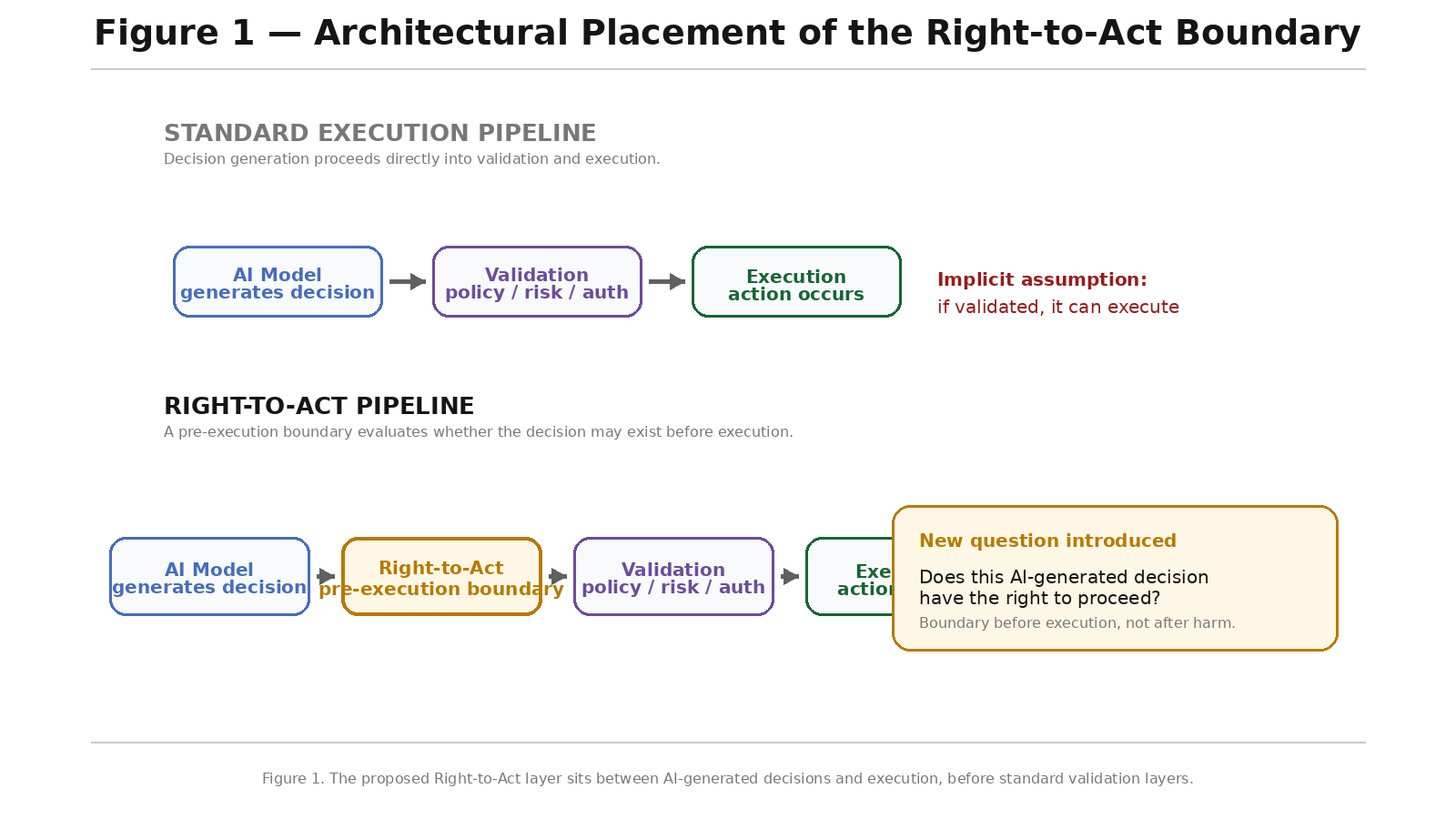}
\caption{The Right-to-Act boundary sits between AI-generated decisions and downstream validation or execution. It asks a different question from authorization, safety, and certification: whether the decision itself has the right to proceed.}
\label{fig:architecture}
\end{figure}

\section{Constraint vs. Score}

Figure \ref{fig:boundary} provides a geometric intuition. Scoring systems may allow actions that exceed a threshold even when a required condition fails. A non-compensatory boundary defines a feasible region, and only decisions inside that region may proceed.

\begin{figure}[H]
\centering
\includegraphics[width=0.98\textwidth]{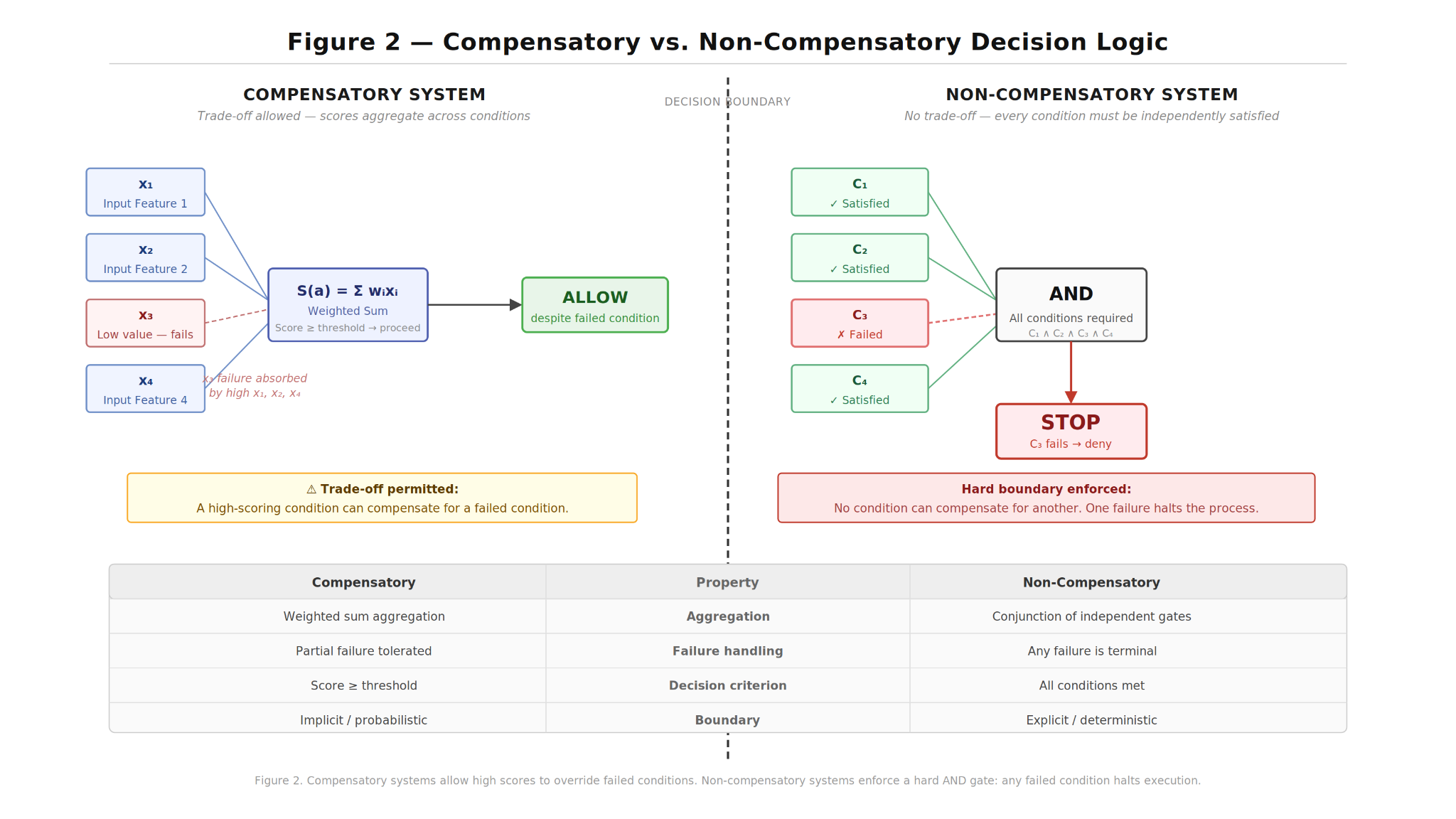}
\caption{A scoring threshold can permit compensation. A Right-to-Act boundary permits execution only inside a feasible region defined by required constraints.}
\label{fig:boundary}
\end{figure}

\begin{table}[H]
\centering
\small
\renewcommand{\arraystretch}{1.22}
\setlength{\tabcolsep}{5pt}
\caption{Comparison of decision paradigms.}
\begin{tabular}{p{3.0cm}p{4.2cm}p{6.0cm}}
\toprule
\textbf{Paradigm} & \textbf{Primary question} & \textbf{Gap relative to pre-action legitimacy} \\
\midrule
Authorization & Who is allowed? & Authorized decisions may still be structurally illegitimate. \\
Safety / alignment & Is it harmful? & Harm reduction does not imply that the decision has the right to execute. \\
Runtime governance & Does the path comply? & Path compliance may govern behavior without defining existential legitimacy. \\
Statistical certification & Is system behavior bounded? & Certification bounds system behavior, not each decision's right to exist. \\
Right-to-Act & Should the decision proceed? & Non-compensatory boundary over required structural conditions. \\
\bottomrule
\end{tabular}
\end{table}

\section{Methodological Perspective}

A publishable Right-to-Act analysis should not stop at a philosophical statement. It requires a method. The following method is deliberately abstract enough to avoid implementation disclosure while remaining testable.

\subsection{Step 1: identify executable decision classes}

The first step is to list decision classes that can produce external effects. Examples include account disablement, payment release, transaction escalation, infrastructure modification, access revocation, medical routing, or automated reporting.

\subsection{Step 2: define the action tuple}

Each decision class is represented as a structured decision object $a$. The representation must contain enough information to evaluate context, target, scope, timing, and consequence.

\subsection{Step 3: define required constraints}

For each decision class, a set of required constraints is defined. The constraints are not weights. They are necessary conditions. The purpose is not to maximize approval but to define the conditions under which execution is legitimate.

\subsection{Step 4: define non-action outcomes}

A failed Right-to-Act evaluation should not be treated as a system error. It should map to a controlled non-action state, such as:

\begin{equation}
\NONACTION \in \{\DEFER,\ \ESCALATE,\ \INFO\}.
\end{equation}

This design makes inaction explicit, auditable, and operationally meaningful.

\subsection{Step 5: evaluate against case studies}

Each case study compares at least two systems:

\begin{enumerate}[leftmargin=1.2cm]
  \item a compensatory baseline, such as a risk score or threshold classifier;
  \item a Right-to-Act boundary, based on required constraints.
\end{enumerate}

A compelling case study is one in which the baseline allows execution while the Right-to-Act boundary blocks, defers, or escalates for a structurally valid reason.

\section{Case Study: AI-Driven Account Suspension}

\subsection{Scenario}

Consider an online platform that uses AI to detect account abuse. The system proposes the following decision:

\begin{quote}
Disable user account $u$ immediately.
\end{quote}

The decision is based on multiple signals:

\begin{itemize}[leftmargin=1.2cm]
  \item repeated automated flags;
  \item similarity to known malicious patterns;
  \item a high model confidence score;
  \item prior suspicious activity associated with the account.
\end{itemize}

This is a realistic decision class because account suspension affects access, reputation, business continuity, and sometimes legal rights.

\begin{figure}[H]
\centering
\includegraphics[width=0.98\textwidth]{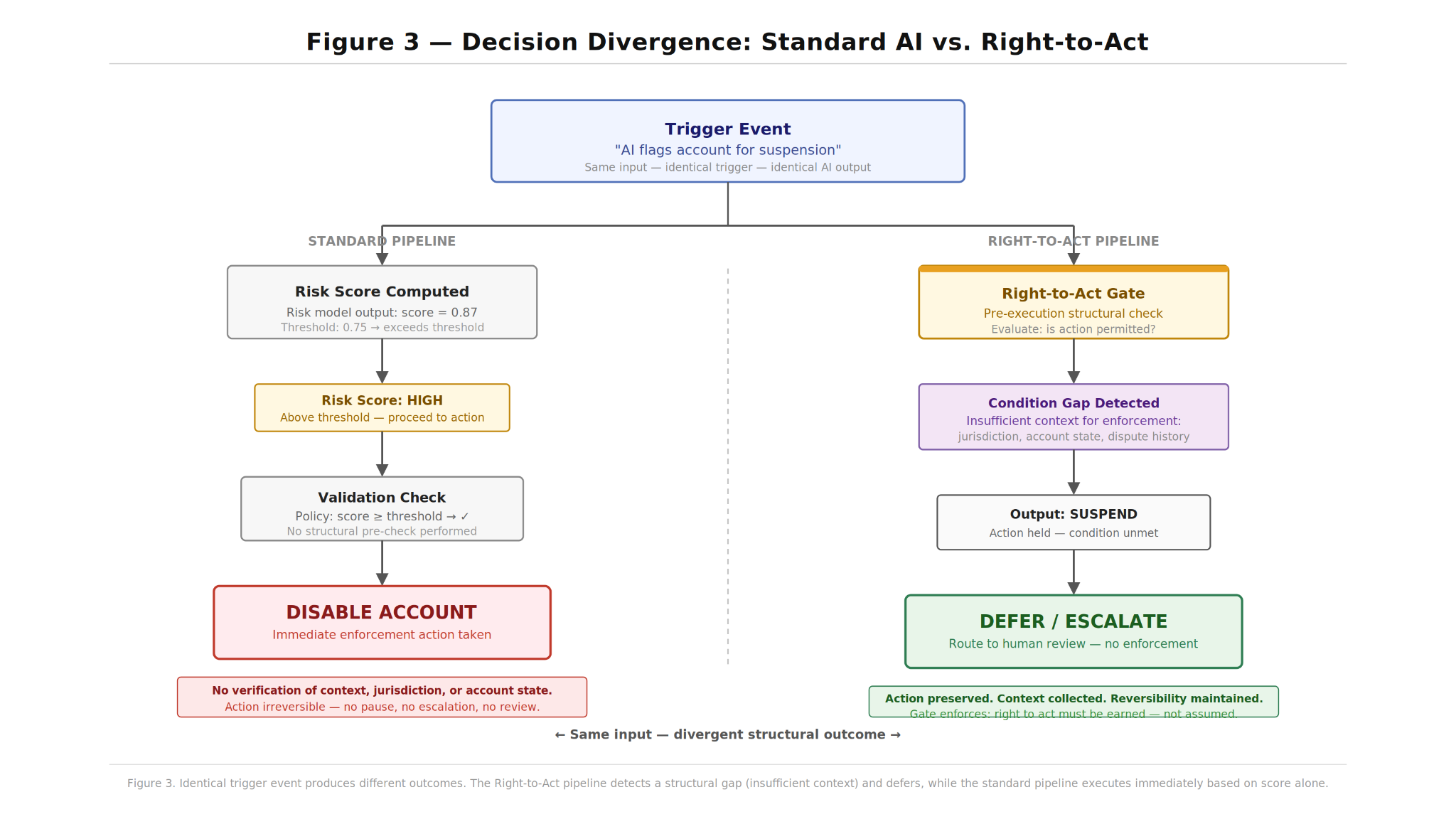}
\caption{The account suspension case highlights the difference between aggregate scoring and a non-compensatory pre-action boundary.}
\label{fig:case}
\end{figure}

\subsection{Compensatory baseline}

A conventional system may compute:

\begin{equation}
S(a)=w_1x_1(a)+w_2x_2(a)+w_3x_3(a)+w_4x_4(a),
\end{equation}

where the features represent flags, similarity, confidence, and prior history. If $S(a)\geq\theta$, the account is disabled.

Assume:

\begin{equation}
S(a)\geq\theta.
\end{equation}

The baseline outcome is therefore:

\begin{equation}
\text{Baseline}(a)=\ALLOW.
\end{equation}

\subsection{Missing structural condition}

Now suppose a required condition fails. For example, the system lacks sufficient contextual verification. The signal may be caused by a compromised device, a false correlation, coordinated reporting, or AI-generated content that resembles malicious behavior but is not causally tied to the user.

Let this condition be:

\begin{equation}
C_{context}(a)=0.
\end{equation}

Under a Right-to-Act boundary:

\begin{equation}
C_{context}(a)=0 \Rightarrow D(a)=\NONACTION.
\end{equation}

The output is therefore not immediate suspension. It may be:

\begin{equation}
D(a)=\ESCALATE \quad \text{or} \quad D(a)=\INFO.
\end{equation}

\subsection{Result}

\begin{table}[H]
\centering
\caption{Case study outcome comparison.}
\begin{tabular}{p{4.2cm}p{4.8cm}p{4.8cm}}
\toprule
\textbf{System} & \textbf{Reasoning} & \textbf{Outcome} \\
\midrule
Compensatory baseline & High aggregate score exceeds threshold & Disable account \\
Right-to-Act boundary & Required contextual condition fails & Non-action: escalate or request information \\
\bottomrule
\end{tabular}
\end{table}

The case shows that a high-confidence, policy-compliant, statistically justified decision may still fail a required legitimacy condition. The difference is not marginal. It changes the execution outcome.

\subsection{Why this case matters}

The account suspension case is useful because it reveals the exact weakness of compensation. A model can be confident, a risk score can be high, and a policy can match, yet the decision may still be invalid if a required condition is missing. This is precisely where a Right-to-Act boundary creates value: it treats missing legitimacy not as lower score, but as a blocker.

\section{Recursive Data and Certification Limits}

A further challenge arises when AI systems are evaluated on data that is itself partially generated, modified, summarized, or influenced by AI systems. In such cases, statistical validation may become self-referential. The evaluation distribution may no longer represent an external ground truth but a distribution recursively shaped by similar systems.

This does not invalidate statistical certification. It identifies a limit. If the reference distribution is shaped by AI-generated artifacts, then behavioral certification may certify conformity to a distorted distribution. A pre-action constraint boundary can mitigate this by enforcing structural requirements on individual decisions rather than relying solely on aggregate behavior.

This distinction is especially important for domains such as fraud detection, content moderation, peer review, automated compliance, and decision support systems, where AI-generated text and behavior may feed back into future evaluation data.

\section{Discussion}

The proposed framework changes the object of evaluation. Instead of asking only whether a model is accurate, safe, compliant, or certified, it asks whether a particular decision should be allowed to become executable.

\subsection{Inaction becomes a first-class outcome}

Many systems treat non-execution as failure. In a Right-to-Act protocol, non-action is a valid outcome. It may mean that the decision must be deferred, escalated, or returned for additional information.

\subsection{Auditability improves}

A non-compensatory decision boundary supports clearer audit trails because execution can be tied to the satisfaction of required conditions. When execution is blocked, the reason is not merely a low score. It is a failed requirement.

\subsection{Model independence}

The boundary can be applied to decisions generated by different models, agents, or workflows, provided that the decision can be represented structurally. This makes the framework compatible with heterogeneous AI systems.

\subsection{Limits of the present paper}

This paper does not define the full internal method for selecting, calibrating, or implementing constraints. It also does not claim that every decision can be reduced to a simple Boolean predicate without domain work. The contribution is the formal boundary and the non-equivalence result, not a universal implementation recipe.

\section{Conclusion}

AI systems increasingly act in the world. Existing systems ask important questions: who is allowed, what is risky, what is safe, what is compliant, and what can be certified. This paper identifies a missing question:

\begin{quote}
\centering
\textbf{Should this decision be allowed to exist before execution?}
\end{quote}

We defined the Pre-Action Legitimacy Gap and formalized a Right-to-Act boundary as a deterministic, non-compensatory constraint system. We showed that compensatory scoring systems cannot guarantee this property and demonstrated the distinction through an account suspension case study.

The central claim is simple:

\begin{quote}
\centering
\textbf{Legitimacy is not a score. It is a boundary.}
\end{quote}

\appendix

\section{Terminology and Legend}

\begin{table}[H]
\centering
\caption{Glossary of terms used in the paper.}
\begin{tabular}{p{3.5cm}p{10cm}}
\toprule
\textbf{Term} & \textbf{Meaning} \\
\midrule
Action / decision $a$ & A structured AI-generated or AI-routed candidate for execution. \\
Action space $A$ & The set of all candidate executable decisions. \\
Constraint $C_i$ & A required condition that must hold for execution to be legitimate. \\
Feasible region $F$ & The set of decisions satisfying all required constraints. \\
Right-to-Act & The decision boundary that determines whether a decision may proceed toward execution. \\
Non-action & A valid controlled outcome such as defer, escalate, or request information. \\
Compensatory system & A system in which positive signals may offset weak or failed signals. \\
Non-compensatory system & A system in which no positive signal may offset a failed required condition. \\
Pre-action legitimacy & The property that a decision has the right to become executable before execution occurs. \\
\bottomrule
\end{tabular}
\end{table}

\section{Minimal Decision Rule}

The minimal decision rule can be written as:

\begin{equation}
\ALLOW \iff \neg \exists i \text{ such that } C_i(a)=0.
\end{equation}

Equivalently:

\begin{equation}
\ALLOW \iff a \in F.
\end{equation}

This appendix states the core principle without exposing internal constraint definitions or implementation details.

\end{document}